
\documentclass[letterpaper, 12 pt, conference]{ieeeconf}  

\IEEEoverridecommandlockouts                              

\overrideIEEEmargins                                      




\usepackage{graphics} 
\usepackage{epsfig} 
\usepackage{mathptmx} 
\usepackage{times} 
\usepackage{amsmath} 
\usepackage{amssymb}  
\usepackage{amsthm}
\usepackage{xcolor}
\usepackage{enumitem}
\usepackage{url}
\allowdisplaybreaks

\newtheoremstyle{myStyle}
  {1ex plus 0.2ex minus 0.1ex} 
  {1ex plus 0.2ex minus 0.1ex} 
  {\itshape}    
  {}            
  {\bfseries}   
  {}           
  {0.5em}       
  {\thmname{#1}~\thmnumber{#2.}~\thmnote{(#3)}}
\makeatother
\theoremstyle{myStyle} 
\newtheorem{theorem}{Theorem}
\newtheorem{lemma}{Lemma}
\newtheoremstyle{myStyle2}
  {1ex plus 0.2ex minus 0.1ex} 
  {1ex plus 0.2ex minus 0.1ex} 
  {\itshape}    
  {}            
  {\bfseries}   
  {}           
  {0.5em}       
  {\thmname{#1}~\thmnumber{#2.}~\thmnote{(#3)}}
\makeatother
\theoremstyle{myStyle2} 
\newtheorem{definition}{Definition}
\newtheorem{proposition}{Proposition}

\newtheoremstyle{myStyle3}
  {1ex plus 0.2ex minus 0.1ex} 
  {1ex plus 0.2ex minus 0.1ex} 
  {}    
  {}            
  {\bfseries}   
  {}           
  {0.5em}       
  {\thmname{#1}~\thmnumber{#2.}~\thmnote{(#3)}}
\makeatother
\theoremstyle{myStyle3} 

\theoremstyle{myStyle3} 
\newtheorem{problem}{Problem}
\title{\LARGE \bf
A Mixed-Integer Approach for Motion Planning of Nonholonomic Robots under Visible Light Communication Constraints
}
\author{Angelo Caregnato-Neto$^{1}$, Marcos R. O. A. Maximo$^{3}$, and Rubens J. M. Afonso$^{2}$
\thanks{*This study was financed in part by the Coordena\c c\~ao de Aperfei\c coamento de Pessoal de N\' ivel Superior - Brasil (CAPES) - Finance Code 001.}
\thanks{$^{1}$Electronic Engineering Division, Instituto Tecnol\'ogico de Aeron\' autica, S\~ ao Jos\' e dos Campos, Brazil. Delft University of Technology, The Netherlands. Corresponding author at \texttt{caregnato.neto@ieee.org} }%
\thanks{$^{2}$ Electronic Engineering Division, Instituto Tecnol\'ogico de Aeron\' autica, S\~ ao Jos\' e dos Campos, Brazil. }%
\thanks{$^{3}$Marcos R. O. A. Maximo is with the Autonomous Computational Systems Lab (LAB-SCA), Computer Science
Division, Instituto Tecnol\' ogico de
Aeron\' autica, S\~ ao Jos\' e dos Campos, Brazil.}%
}
\usepackage[noadjust]{cite}


\begin{document}

\maketitle
\thispagestyle{empty}
\pagestyle{empty}
\begin{abstract}

This work addresses the problem of motion planning for a group of nonholonomic robots under Visible Light Communication (VLC) connectivity requirements. 
In particular, we consider an inspection task performed by a Robot Chain Control System (RCCS), where a leader must visit relevant regions of an environment while the remaining robots operate as relays, maintaining the connectivity between the leader and a base station. 
We leverage Mixed-Integer Linear Programming (MILP) to design a trajectory planner that can coordinate the RCCS, minimizing time and control effort while also handling the issues of directed Line-Of-Sight (LOS), connectivity over directed networks, and the nonlinearity of the robots' dynamics.
The efficacy of the proposal is demonstrated with realistic simulations in the Gazebo environment using the Turtlebot3 robot platform.

\end{abstract}

\section{INTRODUCTION}

Visible Light Communication (VLC) is a wireless transmission technology in which information is exchanged through the modulation of visible light beams \cite{VLC_survey}. 
Theorized for many purposes, from positioning systems \cite{VLC_positioning} to disaster monitoring \cite{VLC_disaster}, a particularly exciting application of VLC is point-to-point communication for autonomous cars \cite{VLC_cars} and robots \cite{VLC_application2}.
Unlike conventional communication methods such as radio frequency, VLC is insensible to   electromagnetic interference, inherently secure, and generates less disturbance in high-density networks \cite{VLC_survey,VLC_application2,VLC_application1}. 
This technology is not strictly dependent on Line-Of-Sight (LOS) between transmitters and receivers, being able to achieve non-LOS communication through surface reflections. However, direct light communication has been demonstrated to provide a tenfold enhancement in power transmission when compared to its non-LOS counterpart, allowing for a substantially better quality of service \cite{VLC_survey,LOS_vs_NLOS}.

In \cite{VLC_application1}, a Robot Chain Control System (RCCS) comprised of nonholonomic robots and supported by VLC communication was demonstrated. An RCCS is a multi-robot platform comprised of a base, transmission relays, and a leader. Its application alongside VLC has been proposed for the inspection and maintenance of pipelines and underground facilities \cite{VLC_application2,VLC_application1,VLC_application3}.   In such situations, the base serves as a central commanding unit, whereas the relays connect the leader, responsible for the task, with the base. This chained communication network allows for the extension of the operational area of the leader. The LOS requirement has been addressed in environments with corners and obstacles with point-to-point connections generated by the relays of the RCCS \cite{VLC_application1}. However, the robots were manually operated, making their coordination challenging and susceptible to human error. Similar concepts of chained network systems have been studied in the context of several multi-robot problems, such as sweep coverage \cite{chained_directed_network} and platooning \cite{platoon}, illustrating the usefulness of this architecture.

The problem of autonomous coordination of multi-robot systems with connectivity maintenance is relevant in applications such as formation control \cite{formation_ctrl}, coverage \cite{coverage}, and patrolling \cite{patrol}. In   \cite{UAV_chain}  the deployment of Unmanned Aerial Vehicles (UAVs) for emergency networks in post-disaster scenarios is studied. A chained network is generated to transmit information back to a relief center. The problem of autonomous exploration of mines with connectivity maintenance between robots and a base station is addressed in \cite{mineRadio}. A global planner computes paths for robots and the non-LOS environment is handled by dropping nodes that incrementally build a network. In \cite{chain_conf}, a modular planning algorithm is designed to generate and maintain a chain between a base station and objectives in dynamic and unpredictable environments. A modified Rapid-exploring Random Tree (RRT) approach was proposed in \cite{RRT_LOS} to handle the decentralized path planning of multi-agent systems (MAS) under omnidirectional LOS communication constraints.
A two-step optimization strategy is proposed in \cite{unconstrainedLOS}  to solve a robot deployment problem with omnidirectional LOS connectivity requirements. In the first step, a pruning algorithm provides an initial solution over a discretized map of the environment. Then, the first solution is refined using unconstrained nonlinear optimization over a continuous space.


Alternatively, Mixed-Integer Programming (MIP)  offers a flexible optimization framework that has been successfully demonstrated for motion planning \cite{surveyMIP} of several platforms such as UAVs \cite{Bellingham2003}, differential robots \cite{AR23}, and autonomous cars \cite{MIQP_cars}.
MIP-based optimization models leverage the use of integer variables to jointly handle the key issues of collision avoidance \cite{Schouwenaars2001}, LOS-connectivity maintenance \cite{EJC,LOS_how}, and decision-making \cite{Bellingham2003,Afonso2020}, in the context of MAS motion planning.
 In particular, Mixed-Integer Linear Programming (MILP) can address such complex problems in finite run-times and the use of branch-and-bound algorithms can significantly improve solver performance \cite{morrison2016}. Furthermore, the quality of feasible solutions w.r.t. the global optimum can also be assessed through the evaluation of optimality gaps \cite{gurobi}.


In this work, we propose a MILP-based motion planning algorithm that allows the autonomous operation of the RCCS in an exploration task under directed LOS-connectivity constraints. The group operates preserving the connections between the base and a leader using robot relays and considering a VLC communication network. Our approach provides three main contributions:
\begin{itemize}[]
	\item Novel MIP constraints for directed-LOS connections, in which a pair of robots must be properly aligned to form communication links;
	\item Formalization and sufficient conditions for the connectivity of the RCCS. In contrast to previous works \cite{EJC,Afonso2020}, the ensuing constraints guarantee the connectivity of a \textit{directed} network;
	\item Explicit treatment of the robot's nonlinear kinematic model using binary variables, which allows the optimization of their orientations while preserving the linear dynamics constraints required by MILP formulations.
\end{itemize}

The functionality of the method is demonstrated with a realistic robotics simulation using Gazebo and the Turtlebot3 platform. This is the first algorithm to address the motion planning of an RCCS under a VLC-based communication network.

%

The paper is organized as follows. Section \ref{sec:II} provides a detailed description of the RCCS-VLC motion planning problem. In Section \ref{sec:III} we propose a MILP optimization model as a solution and discuss the novel constraints required. The simulation and corresponding results are presented in Section \ref{sec:IV}. The paper is concluded in Section \ref{sec:V} where we provide final remarks and future lines of research.

We employ the following notation. Implications involving binary variables, such as $b \implies$ `$\circ$', are equivalent to $b =1 \implies$ `$\circ$'. In-degrees and out-degrees of a vertex `$v$' are denoted by $\text{deg}^-(v)$ and $\text{deg}^+(v)$, respectively. Sets are represented by calligraphic letters as in $\mathcal{M}$. Sets of integers in the interval $[n_0,n_f]$ are written as $\mathcal{I}_{n_0}^{n_f}$. A vector of size $n$ with all entries equal to one is denoted by $\mathbf{1}_n$. The Pontryagin difference operator is written as $\sim$ \cite{Baotic2009}.
\section{Problem Description}\label{sec:II}

We consider an inspection problem using a wheeled RCCS similar to the one proposed in \cite{VLC_application1}. The mission must take a maximum of $\bar{N} > 0$ time steps to be completed.
The group is comprised of $n_a \in \mathbb{N}$ nonholonomic robots with  dynamics discretized with a sampling period $T > 0$ and described by the following equations, $\forall i \in \mathcal{I}_1^{n_a}$, $\forall k \in \mathcal{I}_{1}^{\bar{N}}$,
\begin{align}
	& r_{x,i}(k+1) = r_{x,i}(k) +\sigma_i(k)\cos(\psi_i(k)), \label{eq:dyn_rx} \\
	& r_{y,i}(k+1) = r_{y,i}(k) +\sigma_i(k)\sin(\psi_i(k)), \label{eq:dyn_ry}\\
	& \sigma_i(k) =  \xi_i(k)T + a_i(k)\dfrac{T^2}{2}, \label{eq:dyn_sigma}\\
	& \xi_i(k+1) = \xi_i(k) + a_i(k)T, \label{eq:lin_vel}\\
	& \psi_i(k+1) = \psi_i(k) + \Delta \psi_i(k), \label{eq:dyn_psi}
\end{align}
where  $r_{x,i} \in \mathbb{R}$ and $r_{y,i} \in \mathbb{R}$ represent the position of the $i$-th robot w.r.t. $x$ and $y$ axes of a global coordinate system, respectively; $\psi_i \in \mathbb{R}$ corresponds to the orientation of the robot w.r.t. the same coordinate system; $\sigma_i\in \mathbb{R}$, $\xi_i \in \mathbb{R}$, and $a_i \in \mathbb{R}$ are the robot's linear displacement, velocity, and acceleration, respectively. 
%

The components of the RCCS are divided into a static base and two types of mobile agents: the relays and a leader. The purpose of the latter is to perform the inspection of a region while the relays coordinate to keep the group connected to the base. See Fig. \ref{fig:vlc} for an example. For the sake of simplicity, the agents with indices 1 and $n_a$ are always assigned as the base and the leader, respectively. The inspection finishes after all $n_t \in \mathbb{N}$ targets, representing regions of interest in the environment, are reached by the leader. The targets are written as the polytopes $\mathcal{T}_e \subset \mathbb{R}^2,\ \forall e \in \mathcal{I}_1^{n_t}$.

The base is responsible for broadcasting high-level commands, such as trajectories to be followed and targets to be visited, or emergency stop commands.  The mobile agents operate in a region $\mathcal{A} \subset \mathbb{R}^2$ which contains $n_o \in \mathbb{N}$ obstacles $\mathcal{O}_c,\ \forall c \in \mathcal{I}_1^{n_o}$.
The space occupied by the structure of the $j$-th robot is represented by the polytope $\mathcal{R}_j(k), \forall j \in \mathcal{I}_1^{n_a}$, which is centered at $\mathbf{r}_j(k) = [r_{x,j},r_{y,j}]^\top$. The allowed operation space of robot  $i$ at time step $k$ is written as, $i \in \mathcal{I}_1^{n_a}$, 
%
\begin{align}
	\bar{\mathcal{A}}_i(k) = \left \{ \mathcal{A} \setminus \left ( \bigcup_{c=1}^{n_o} \mathcal{O}_c \cup \bigcup_{\substack{j =1\\ j\neq i}}^{n_a}   \mathcal{R}_j(k)  \right ) \right \}  \sim \mathcal{R}_i.
	\label{eq:freeSpace}
\end{align}

\begin{figure}[!ht]
	\begin{center}	\includegraphics[width=0.45\textwidth]{./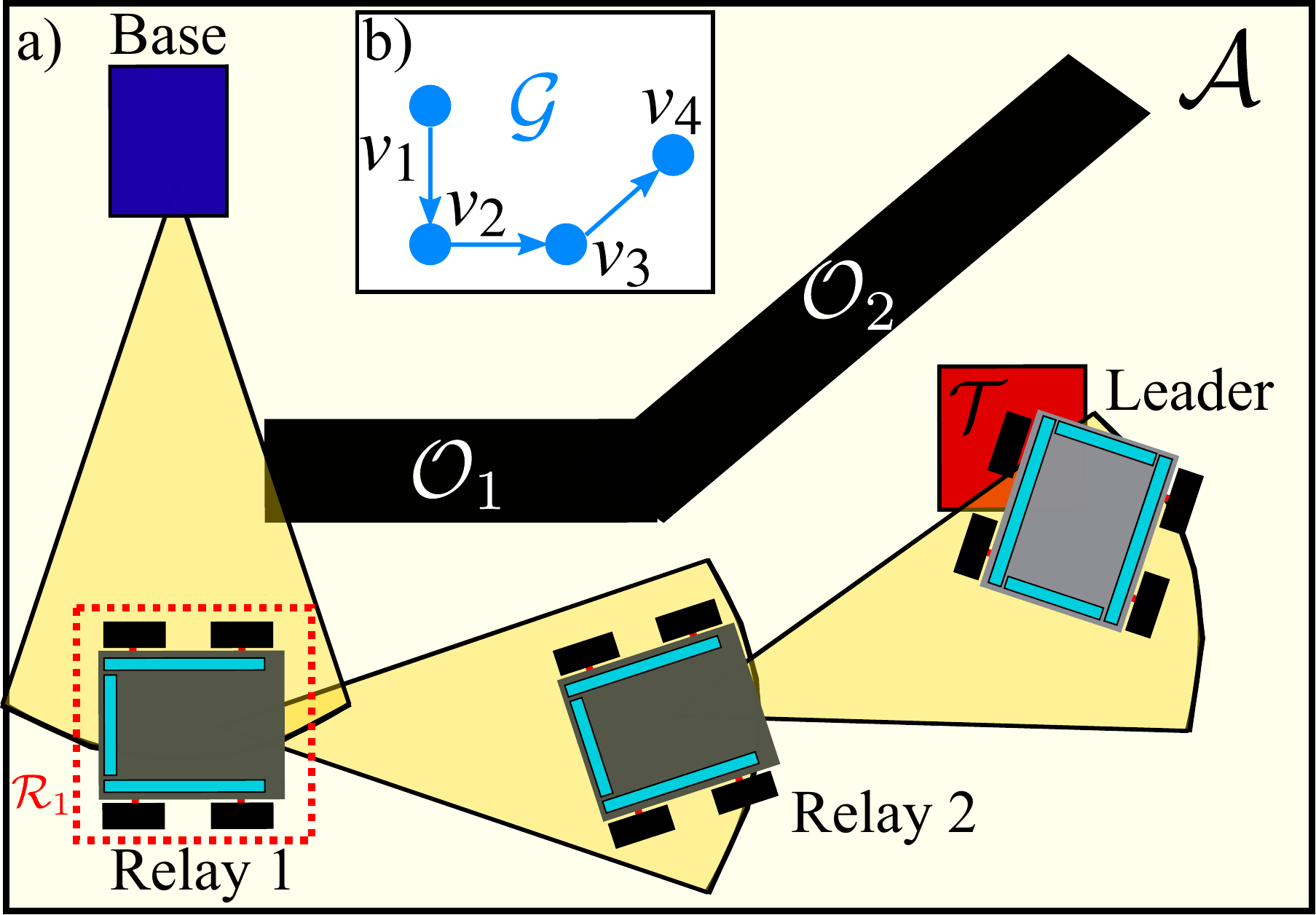}
	\end{center}
	\caption{Illustration of RCCS as devised in \cite{VLC_application1}. a) A base station with two relay robots and a leader connected under VLC communication. b) Corresponding chained communication network. }
	\label{fig:vlc}
\end{figure}

The system operates under a VLC-based communication network, as illustrated in Fig. \ref{fig:vlc}. The relay robots are equipped with solar panel receivers attached to their sides and back, whereas the leader also has a panel attached to its front. The transmitters of the relays and base are LED spotlights. The resulting communication system is simplex, i.e. the communication occurs in a single direction, and the ensuing network can be represented by the time-indexed digraph  $\mathcal{G}(k) = \{\mathcal{V},\mathcal{E}(k)\}$, where $\mathcal{V} = \{v_1,v_2,\dots,v_{n_a}\}$ is the set of vertices, with $v_1$ and $v_{n_a}$ representing the base and the leader, respectively, whereas the remaining vertices represent the relays. The set of edges is denoted by $\mathcal{E}(k)$ with edges corresponding to the time-varying connections between robots. In this context, Problem 1 specifies the issues to be addressed henceforth.

\begin{problem}
	Develop an algorithm to plan trajectories for the RCCS such that:
	\begin{enumerate}[label=\alph*)]
		\item all targets are visited by the leader within the time-limit;
		\item physical constraints of the robots are respected;
		\item the base remains connected with the leader;
		\item the robots avoid collisions;
		\item a compromise between time and control effort is minimized.

	\end{enumerate}
	\label{prob1}
\end{problem}
 \section{MILP motion planning} \label{sec:III}
We propose the use of MILP to solve Problem \ref{prob1}. This technique has been shown to effectively handle motion planning of nonholonomic robots under connectivity constraints in obstacle-filled scenarios \cite{AR23}. However, linear models were employed in the motion planning module with the resulting mismatches being handled by tracking controllers. Consequently, the proposed scheme was unable to explicitly consider the robot's orientation as a decision variable.
Since the connectivity of the RCCS relies on the full pose of each robot, the nonlinear dynamics represented by (\ref{eq:dyn_rx}) and (\ref{eq:dyn_ry}) must be integrated into the model, allowing the constrained optimization of $\psi$. Additionally, the requirements of directed-LOS for connections and the connectivity maintenance of the ensuing network also represent novel challenges. The proposed solutions for these problems are developed in the following subsections.
\subsection{Dynamics constraints}
To accommodate the nonlinear dynamics of (\ref{eq:dyn_rx}) and (\ref{eq:dyn_ry}) in the MILP model, we start by assuming that the orientation is constrained to a discrete set of $n_{ori} \in \mathbb{N}$ possible values $\mathcal{S}_i = \{\hat{\psi}_{i,g}\}$, $i \in \mathcal{I}_1^{n_a}$, $\forall g \in \mathcal{I}_1^{n_{ori}}$, with 
\begin{align}
	\hat{\psi}_{i,g} = \dfrac{2\pi (g-1)}{n_{ori}}. \label{eq:disc_angles}
\end{align}

Considering a maximum horizon of $\bar{N} \in \mathbb{N}$ time steps, the dynamics are now encoded as linear constraints with binary variables, $\forall i \in \mathcal{I}_1^{n_a}$, $\forall g \in \mathcal{I}_1^{n_{ori}}$, $\forall k \in \mathcal{I}_0^{\bar{N}}$,
\begin{align}
	&r_{x,i}(k+1) \leq r_{x,i}(k)+ \nonumber \\
	& \qquad \qquad  \sigma_i(k)\cos(\hat{\psi}_{i,g}(k))+(1-b^{ori}_{i,g}(k))M,\label{const:dyn_1a}\\
	-&r_{x,i}(k+1) \leq -r_{x,i}(k)- \nonumber \\
	& \qquad \qquad  \sigma_i(k)\cos(\hat{\psi}_{i,g}(k))+(1-b^{ori}_{i,g}(k))M,\label{const:dyn_1b}\\
	&r_{y,i}(k+1) \leq r_{y,i}(k)+ \nonumber \\
	& \qquad \qquad  \sigma_i(k)\sin(\hat{\psi}_{i,g}(k))+(1-b^{ori}_{i,g}(k))M,\label{const:dyn_2a}\\
	-&r_{y,i}(k+1) \leq -r_{y,i}(k)- \nonumber \\
	& \qquad \qquad  \sigma_i(k)\sin(\hat{\psi}_{i,g}(k))+(1-b^{ori}_{i,g}(k))M,\label{const:dyn_2b}
\end{align}
where $b_{i,g}^{ori}  \in \{0,1\}$ is the orientation binary variable and $M \in \mathbb{R}$ is the ``Big-M'' constant \cite{bigm}. Through constraints (\ref{const:dyn_1a})-(\ref{const:dyn_2b}) the dynamics are written for every possible orientation in $\mathcal{S}_i$. Naturally, each robot is oriented towards one single angle at each time step and the aforementioned constraints must be imposed only for this orientation. Thus, the following constraint is imposed,  $\forall i \in \mathcal{I}_1^{n_a}$, $\forall k \in \mathcal{I}_0^{\bar{N}}$
%
\begin{align}
	&\sum_{g=1}^{n_{ori}}b^{ori}_{i,g}(k) = 1.
	\label{const:ori_binary}
\end{align}

Constraint (\ref{const:ori_binary}) guarantees that (\ref{const:dyn_1a})-(\ref{const:dyn_2b}) are only active for one orientation at each time step, while the remaining constraints are relaxed by the ``Big-M constant''. The actual angle at time step $k$ is written as $\psi_i(k) = \boldsymbol{\Psi}^\top \mathbf{b}_{i}^{ori}(k)$ where 
\begin{align}
	& \boldsymbol{\Psi} = [\hat{\psi}_{i,1},\hat{\psi}_{i,2},\dots,\hat{\psi}_{i,n_{ori}}]^\top,\\
	& \mathbf{b}_{i}^{ori}(k) = [b^{ori}_{i,1}(k),b^{ori}_{i,2}(k),\dots,b^{ori}_{i,n_{ori}}(k)]^\top.
\end{align}

This approach allows the trigonometric functions in (\ref{const:dyn_1a})-(\ref{const:dyn_2b}) to be computed beforehand for each possible orientation, yielding linear constraints suitable for the MILP model. The remaining dynamics (\ref{eq:dyn_sigma}), (\ref{eq:lin_vel}), and (\ref{eq:dyn_psi}) are  written directly as linear constraints.

\subsection{Network connectivity}
The purpose of the communication network in an RCCS is to transmit commands from the base station to the leader. Therefore, RCCS connectivity is defined as follows.
\begin{definition}\label{def:RCCS_conn}
	An RCCS is connected at time step $k$ if its communication graph $\mathcal{G}(k)$ contains a directed path between the vertices $v_1$ and $v_{n_a}$.
\end{definition}

We propose constraints for the MILP model that guarantee RCCS connectivity at each time step. The first step is to encode the vertices' in-degrees and out-degrees respectively as, $\forall i \in \mathcal{I}_1^{n_a}$, $\forall k \in \mathcal{I}_0^{\bar{N}}$,
\begin{align}
	&\text{deg}_i^-(k) = \sum_{\substack{j \in \mathcal{I}_1^{n_a}\\j \neq i}} b^{con}_{j,i}(k),\label{const:indeg} \\
	&\text{deg}_i^+(k) = \sum_{\substack{j \in \mathcal{I}_1^{n_a}\\j \neq i}} b^{con}_{i,j}(k),
	\label{const:outdeg}
\end{align}
where $b_{i,j}^{con} \in \{0,1\}$ is the connectivity binary variable. The activation of $b_{i,j}^{con}$ implies the existence of a directed edge from vertex $i$ towards vertex $j$ and the corresponding conditions for activation will be discussed in Section \ref{sec:conic_conn}. The following constraints are now proposed for RCCS connectivity, $\forall k \in \mathcal{I}_0^{\bar{N}}$, 
\begin{align}
	&\text{deg}_1^-(k) =0, \label{const:base_indeg}\\
	&\text{deg}_{n_a}^+(k) =0, \label{const:leader_outdeg}\\
	& \text{deg}_m^-(k) = 1,\ \forall m \in \mathcal{I}_2^{n_a}, \label{const:gen_indeg}\\ 
	&\text{deg}_m^+(k) = 1,\ \forall m \in \mathcal{I}_1^{n_a-1}\label{const:gen_outdeg}.
\end{align}

\begin{theorem} \label{theo:const_arbo}
	Satisfaction of constraints (\ref{const:base_indeg})-(\ref{const:gen_outdeg}) guarantees that the RCCS is connected.
\end{theorem}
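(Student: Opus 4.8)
The plan is to exhibit a directed path from $v_1$ to $v_{n_a}$ constructively, by following out-edges, and to use the degree constraints to certify that this construction is well defined, never repeats a vertex, and must terminate at the leader. First I would read off the structural meaning of the constraints: by (\ref{const:gen_outdeg}) and (\ref{const:leader_outdeg}), every vertex except $v_{n_a}$ has out-degree exactly one while $v_{n_a}$ has out-degree zero, so each non-leader vertex possesses a \emph{unique} outgoing edge and hence a well-defined successor. I would then define a walk $w_0, w_1, w_2, \dots$ by setting $w_0 = v_1$ and, whenever $w_t \neq v_{n_a}$, letting $w_{t+1}$ be the unique out-neighbor of $w_t$; the walk halts only upon reaching $v_{n_a}$, which is the one vertex from which no edge departs.

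The key step, and the one I expect to be the main obstacle, is showing that this walk visits pairwise distinct vertices (so that it is a genuine simple path rather than a walk that loops). I would argue by contradiction: let $t$ be the smallest index at which a vertex repeats, say $w_t = w_s$ with $s < t$. If $s = 0$ then $w_t = v_1$, so the edge $w_{t-1} \to v_1$ would force $\text{deg}_1^-(k) \geq 1$, contradicting (\ref{const:base_indeg}). If $s \geq 1$, then $w_s \neq v_1$ (otherwise the edge $w_{s-1} \to v_1$ again violates (\ref{const:base_indeg})), so $w_s$ corresponds to some $v_m$ with $m \in \mathcal{I}_2^{n_a}$ and therefore has in-degree exactly one by (\ref{const:gen_indeg}); but both $w_{s-1} \to w_s$ and $w_{t-1} \to w_t = w_s$ are edges into $w_s$, which forces $w_{s-1} = w_{t-1}$ and hence a repetition already at index $t-1 < t$, contradicting the minimality of $t$. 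This rules out repetition and pins down the role of the in-degree conditions.

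Finally, since $\mathcal{V}$ is finite and the walk never repeats a vertex, it cannot proceed indefinitely; as it can only halt at $v_{n_a}$, there exists some $t \leq n_a - 1$ with $w_t = v_{n_a}$. The resulting sequence $v_1 = w_0 \to w_1 \to \dots \to w_t = v_{n_a}$ is then a directed path from the base to the leader, so $\mathcal{G}(k)$ is RCCS-connected in the sense of Definition \ref{def:RCCS_conn}. I would close by remarking that constraints (\ref{const:base_indeg})--(\ref{const:gen_outdeg}) may in principle admit extra structure disjoint from this path (for instance, an isolated cycle among relay vertices, which still respects all four degree conditions), but such components are immaterial: Definition \ref{def:RCCS_conn} requires only the existence of the single base-to-leader path, which the construction above always produces. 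Everything besides the no-repetition argument of the second paragraph is routine bookkeeping.
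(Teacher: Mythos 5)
Your proof is correct and follows essentially the same route as the paper's: starting from $v_1$, follow the unique out-edges guaranteed by (\ref{const:gen_outdeg}), use (\ref{const:base_indeg}) and (\ref{const:gen_indeg}) to rule out revisiting a vertex, and conclude by finiteness that the walk must terminate at the only admissible sink $v_{n_a}$. Your minimal-counterexample argument for non-repetition is a more rigorous rendering of the step the paper states only informally, and your closing remark that disjoint relay cycles may coexist with the path is accurate but, as you note, immaterial to Definition \ref{def:RCCS_conn}.
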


\begin{proof}
	Constraints (\ref{const:base_indeg}) and (\ref{const:gen_outdeg}) enforce that $v_1$ must be connected towards some vertex $v' \in \mathcal{V} \setminus \{v_1\}$. Either, $v' = v_{n_a}$ and there is a directed path between base and leader, or $v' = v_i,\  i \in \mathcal{I}_2^{n_a-1},$ represents a relay.
	In the second case, in view of (\ref{const:gen_outdeg}), $v'$ must be connected towards some other vertex $v'' \in \mathcal{V} \setminus \{v_1,v'\}$ ($v'$ is excluded due to (\ref{const:gen_indeg})). Again, either $v'' = v_{n_a}$ and the network is connected, or $v'' \neq v_{n_a}$. In the latter case, $v''$ must connect to another a vertex $v''' \in \mathcal{V} \setminus \{v_1,v',v''\}$ in light of (\ref{const:gen_outdeg}) and so on. Since the number of vertices $n_a$ is finite, this process can be repeated at most $n_a-1$ times before reaching $v_{n_a}$. 
	Constraint (\ref{const:gen_indeg}) imposes that any subsequent relay connections that are formed due to (\ref{const:gen_outdeg}) add a distinct vertex to the chained digraph. 
	Since there is a finite number of relays, this chain is also finite and eventually must reach a sink. Constraint (\ref{const:leader_outdeg}) allows only $v_{n_a}$ to be a sink.
	Thus, the chain must contain $v_{n_a}$  and there is a directed path between the vertices representing the base and the leader. It follows that the RCCS is connected according to Definition \ref{def:RCCS_conn}.  
\end{proof}

\subsection{Directed conic LOS} \label{sec:conic_conn}

The first condition for VLC-based connections is that there must be a clear LOS between the robots. The LOS between robots $i$ and $j$ is considered clear at time step $k$ if there are no obstacles obstructing it \cite{EJC}, i.e., $i \in \mathcal{I}_1^{n_a}$, $j \in \mathcal{I}_1^{n_a} \setminus i$, $\forall c \in \mathcal{I}_1^{n_o}$,
\begin{align}
	\mathcal{L}_{i,j}(k) \cap \mathcal{O}_c = \emptyset,
	\label{const:LOS_ejc}
\end{align} 
where $\mathcal{L}_{i,j} = \{\mathbf{p} \in \mathbb{R}^2 \ \vert \ \mathbf{p} =  \mathbf{r}_i\alpha + \mathbf{r}_j(1-\alpha),\ \forall \alpha \in [0,1] \}$ with $\mathbf{r}_i = [r_{x,i}, r_{y,i}]^\top$ and $\mathbf{r}_j = [r_{x,j}, r_{y,j}]^\top$, i.e., $\mathcal{L}_{i,j}$ is the line segment connecting the position of the two robots. Condition (\ref{const:LOS_ejc}) is enforced by the following constraint, $i \in \mathcal{I}_1^{n_a}$, $j \in \mathcal{I}_1^{n_a} \setminus i$, $\forall c \in \mathcal{I}_1^{n_o}$, $\forall k \in \mathcal{I}_0^{\bar{N}}$,
\begin{align}
	b_{i,j}^{con}(k) \implies  \mathcal{L}_{i,j}(k) \cap \mathcal{O}_c = \emptyset.
	\label{const:LOS}
\end{align}
For details on the implementation of (\ref{const:LOS}), see \cite{EJC}.
\begin{lemma}\label{lemma:los}
	Constraint (\ref{const:LOS}) guarantees that if $b^{con}_{i,j}(k) = 1$, then there is a clear LOS between robots. 
\end{lemma}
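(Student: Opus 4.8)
The plan is to argue directly from the definition of a clear LOS together with the semantics assigned to implication constraints in the paper's notation. Recall that a clear LOS between robots $i$ and $j$ is \emph{defined} by condition (\ref{const:LOS_ejc}), namely $\mathcal{L}_{i,j}(k) \cap \mathcal{O}_c = \emptyset$ holding for every obstacle $c \in \mathcal{I}_1^{n_o}$. Recall also the stated convention that an implication written as $b \implies$ `$\circ$' means precisely $b = 1 \implies$ `$\circ$'. Under these conventions the lemma is essentially a definitional unfolding of constraint (\ref{const:LOS}), so the proof should be short and direct rather than computational.

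The key steps, in order, are as follows. First I would assume the hypothesis $b^{con}_{i,j}(k) = 1$. Next I would invoke constraint (\ref{const:LOS}), which is imposed \emph{for every} obstacle index $c \in \mathcal{I}_1^{n_o}$ (as indicated by the quantifier accompanying it): activating the antecedent therefore fires the consequent $\mathcal{L}_{i,j}(k) \cap \mathcal{O}_c = \emptyset$ simultaneously for all $c$. Finally I would observe that this collection of empty intersections over all obstacles is exactly condition (\ref{const:LOS_ejc}), which is the definition of a clear LOS between the two robots. Hence $b^{con}_{i,j}(k) = 1$ forces a clear LOS, as claimed.

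The one point requiring care — and the closest thing to an obstacle — is ensuring that the logical implication (\ref{const:LOS}) is faithfully realized as a family of linear MILP constraints. In practice this is done through a Big-M encoding, so I would either cite the implementation in \cite{EJC} or briefly verify that when $b^{con}_{i,j}(k) = 1$ the Big-M slack terms vanish and genuinely enforce the separation $\mathcal{L}_{i,j}(k) \cap \mathcal{O}_c = \emptyset$, whereas when $b^{con}_{i,j}(k) = 0$ those terms relax the constraint so that no LOS requirement is placed. Since the lemma is stated with constraint (\ref{const:LOS}) taken as holding, however, the argument reduces to this definitional unfolding plus the universal quantification over obstacles, and no further machinery is needed.
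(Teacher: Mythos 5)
Your proof is correct and effectively coincides with the paper's, which simply cites \cite{EJC}: the lemma reduces to unfolding the implication (\ref{const:LOS}) over all obstacle indices $c$ to recover exactly condition (\ref{const:LOS_ejc}), with the only substantive content being the Big-M realization of that implication, which both you and the paper defer to \cite{EJC}. Your version is, if anything, slightly more explicit than the paper's one-line citation, and correctly isolates the Big-M encoding as the sole point requiring verification.
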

\begin{proof}
	See \cite{EJC} for a proof. 
\end{proof}

The second condition is that the receiver must be within the cone of light generated by the transmitter. The LED spotlights of the robots are rigidly attached to their body, so the cone shares the same orientation as the robot. The cone is parametrized by its aperture $\theta$ and range $h$ as illustrated in Fig. \ref{fig:cone_approx}. For the sake of simplicity, its apex is positioned at the geometric center of the robot, but the method allows for different positioning depending on where the spotlights are installed. We approximate the cones for each possible orientation as polytopes with the following halfspace representation, $\forall i \in \mathcal{I}_1^{n_a}$, $\forall g \in \mathcal{I}_1^{n_{ori}}$, 
\begin{align}
	\mathcal{C}_{i,g} = \{\mathbf{z} \in \mathbb{R}^2\ \vert \ \mathbf{P}_{i,g}\mathbf{z} \leq \mathbf{q}_{i,g} \},
\end{align}
with $\mathbf{P}_{i,g} \in \mathbb{R}^{n_s \times 2 }$ and $\mathbf{q}_{i,g} \in \mathbb{R}^{n_s}$ where $n_s \in \mathbb{N}$ is the number of sides of the polygon $\mathcal{C}_{i,g} \subset \mathbb{R}^2$. 

%
\begin{figure}[!ht]
	\centering
	\includegraphics[width=0.30\textwidth]{./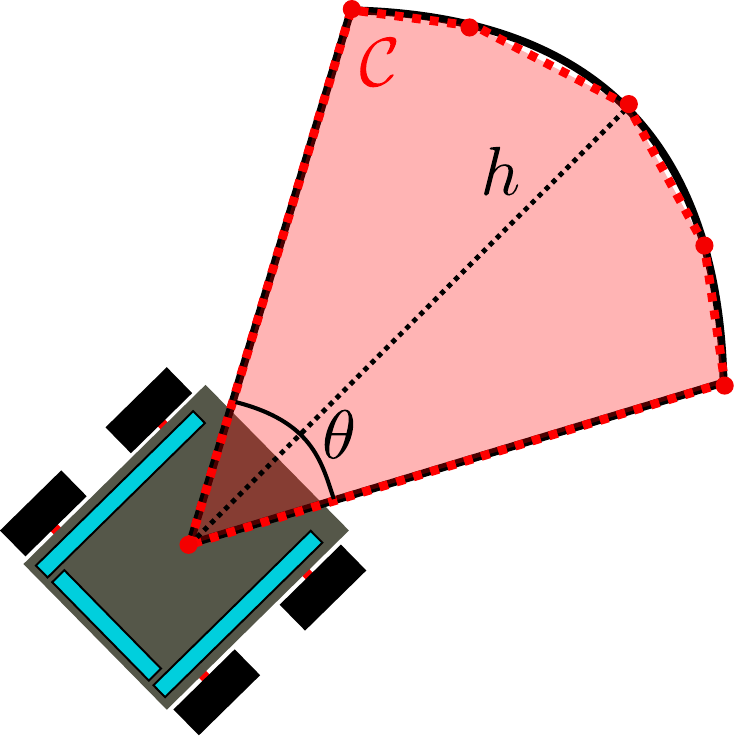}
	\caption{Original and polytopic approximation of RCCS's conic light beams.}
	\label{fig:cone_approx}
\end{figure}

Define the relative position $\tilde{\mathbf{r}}_{i,j} \triangleq \mathbf{r}_j - \mathbf{r}_i,\ i\neq j$. Then, the directed conic LOS constraint can be written as,  $\forall g \in \mathcal{I}_1^{n_{ori}}$, $\forall i \in \mathcal{I}_1^{n_a}$, $\forall j \in \mathcal{I}_1^{n_a} \setminus i$, $\forall k \in \mathcal{I}_0^{\bar{N}}$
\begin{align}
	&\mathbf{P}_{i,g} \tilde{\mathbf{r}}_{i,j}(k) \leq \mathbf{q}_{i,g}  + (1-b^{ori}_{i,g}(k))M  \nonumber \\
	& \qquad \qquad \qquad \qquad \qquad  + (1-b^{con}_{i,j}(k))M.
	\label{const:cone}
\end{align}
\begin{proposition}
	Let $\psi_{i,h}(k)$ be the orientation of the transmitter $i$ at time step $k$. Then, $b^{con}_{i,j}(k) = 1$ and the satisfaction of (\ref{const:LOS}) and (\ref{const:cone}) imply that receiver $j$ is within the unobstructed light cone of the transmitter $i$.	
\end{proposition}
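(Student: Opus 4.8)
The plan is to split the claim into its two constituent requirements — that receiver $j$ lies inside the light cone of $i$, and that the line of sight joining them is free of obstacles — and to dispatch each by a direct Big-M activation argument combined with the already-established Lemma \ref{lemma:los}.

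First I would fix the active orientation. By constraint (\ref{const:ori_binary}) exactly one index $g^\star \in \mathcal{I}_1^{n_{ori}}$ satisfies $b^{ori}_{i,g^\star}(k)=1$, so the heading of transmitter $i$ is $\psi_i(k)=\hat{\psi}_{i,g^\star}$ and the genuine light cone at time step $k$ is the precomputed polytope $\mathcal{C}_{i,g^\star}$. This pins down which of the $n_{ori}$ cone approximations is the physically relevant one.

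Next I would establish cone membership. Instantiating (\ref{const:cone}) at $g=g^\star$ and using the hypotheses $b^{ori}_{i,g^\star}(k)=1$ and $b^{con}_{i,j}(k)=1$ makes both penalty terms $(1-b^{ori}_{i,g^\star}(k))M$ and $(1-b^{con}_{i,j}(k))M$ vanish, leaving the bare inequality $\mathbf{P}_{i,g^\star}\tilde{\mathbf{r}}_{i,j}(k)\leq\mathbf{q}_{i,g^\star}$. By the halfspace representation of $\mathcal{C}_{i,g^\star}$ this reads exactly $\tilde{\mathbf{r}}_{i,j}(k)\in\mathcal{C}_{i,g^\star}$; since $\tilde{\mathbf{r}}_{i,j}(k)=\mathbf{r}_j-\mathbf{r}_i$ is the position of $j$ measured from the cone's apex at $\mathbf{r}_i$, receiver $j$ sits geometrically within the light cone of $i$. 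I would also remark that for every inactive index $g\neq g^\star$ the term $(1-b^{ori}_{i,g}(k))M=M$ relaxes (\ref{const:cone}), so those copies impose nothing and cannot spuriously restrict $\tilde{\mathbf{r}}_{i,j}(k)$.

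Finally I would invoke the LOS half and assemble the conclusion. Because $b^{con}_{i,j}(k)=1$, constraint (\ref{const:LOS}) is active and Lemma \ref{lemma:los} yields a clear line of sight, i.e. $\mathcal{L}_{i,j}(k)\cap\mathcal{O}_c=\emptyset$ for every obstacle $c$. Combining this with the cone membership above places $j$ in the intersection of $\mathcal{C}_{i,g^\star}$ with the obstacle-free region, which is precisely the unobstructed light cone of transmitter $i$. The one point demanding care is the Big-M bookkeeping in the cone step: one must confirm that it is the simultaneous activation of \emph{both} binaries that collapses (\ref{const:cone}) to the true cone inequality, and that no mismatched orientation copy becomes binding — otherwise the enforced cone could fail to correspond to the robot's actual heading $\psi_i(k)=\hat{\psi}_{i,g^\star}$.
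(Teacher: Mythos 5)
Your proposal is correct and follows essentially the same route as the paper's own proof: fix the unique active orientation binary via (\ref{const:ori_binary}), observe that both Big-M terms in (\ref{const:cone}) vanish so the cone inequality $\mathbf{P}_{i,g^\star}\tilde{\mathbf{r}}_{i,j}(k)\leq\mathbf{q}_{i,g^\star}$ must hold, and combine with Lemma \ref{lemma:los} for the clear LOS. Your version is slightly more explicit about the relaxed copies for inactive orientations, but the argument is the same.
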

\begin{proof}
	If the orientation of transmitter $i$ is $\psi_{i,h}$ then only $b_{i,h}^{ori}=1$ due to (\ref{const:ori_binary}). Since $b^{con}_{i,j}(k) = 1$,  the particular inequality $\mathbf{P}_{i,h} \tilde{\mathbf{r}}_{i,j}(k) \leq \mathbf{q}_{i,h}$ from (\ref{const:cone})  is not relaxed by the ``Big-M'' approach and consequently must hold. Moreover, the activation of $b^{con}_{i,j}(k)$ implies a clear LOS between the robots (see Lemma \ref{lemma:los}). From these conclusions, it follows the receiver must be within an unobstructed region of the transmitter's cone of light.
\end{proof}

Finally, since the relay robots are not equipped with solar panels in their front, they are not allowed to form connections while facing each other or the base. An approximation for this condition can be attained by guaranteeing that the transmitter is never within the cone of light of the receiver. Let $\mathbf{b}^{fro}_{i,j,g} = [b^{fro}_{1,i,j,g},\dots,b^{fro}_{n_s,i,j,g}]$ be a vector of $n_s \in \mathbb{N}$ binaries variables. Then, we propose the following constraints, $\forall g \in \mathcal{I}_1^{n_{ori}}$, $\forall i \in \mathcal{I}_1^{n_a-1}$, $\forall j \in \mathcal{I}_1^{n_a-1} \setminus i$, $\forall k \in \mathcal{I}_0^{\bar{N}}$
\begin{align}
	&-\mathbf{P}_{j,g} \tilde{\mathbf{r}}_{j,i}(k) \leq -\mathbf{q}_{j,g} + (\mathbf{1}_{n_s} - \mathbf{b}^{fro}_{i,j,g}(k)) M, \label{const:front_1}\\
	& b^{con}_{i,j}(k) \leq \sum_{z=1}^{n_s} b^{fro}_{z,i,j,g}(k) + (1-b^{ori}_{j,g}(k)). \label{const:front_2}
\end{align}

Constraint (\ref{const:front_1}) is similar to the obstacle avoidance constraints in \cite{Afonso2020}. It guarantees that the $i$-th robot is not within the cone of light of the $j$-th robot if at least one entry of $\mathbf{b}^{fro}_{i,j,h}$ takes the value of 1.
Constraint (\ref{const:front_2}) conditions the activation of $b^{con}_{i,j}$ to at least one entry $\mathbf{b}^{fro}_{i,j,g}$ taking the value of 1. The orientation binary guarantees that this constraint is only enforced for the cone with appropriate orientation.

\subsection{Rotation dynamics compensation}
In the proposed motion planning algorithm the assumption that the robots can reach the commanded orientations instantly is made, as shown in (\ref{eq:dyn_psi}). To adhere to this assumption, we compensate for the rotation dynamics delay through the manipulation of the cone polytope.

Let $\Delta t$ denote the maximum time, in seconds, taken by a robot to rotate a maximum orientation increment of $\Delta \psi^{max}_i(k)$, with the rotation increment being defined as $\Delta \psi_{i}(k)  \triangleq \psi_{i,g}(k+1)-\psi_{i,g}(k)$. To compensate for the delay in rotation, one must ensure that the receiver robot is still within the light cone of the transmitter after moving for $\Delta t$ seconds. The maximum displacement of a robot during this time is $d = \xi^{max} \Delta t$, where $\xi^{max} >0$ is the maximum linear velocity of the robots. Thus, we compensate for the potential movement of the receiver by planning the trajectories considering light cone polytopes $\mathcal{C}_{i,g}$ shrinked by $d$ in all directions. This is accomplished with the following operation, $\forall i \in \mathcal{I}_1^{n_a}$, $\forall g \in \mathcal{I}_1^{n_{ori}}$,
\begin{align}
	\bar{\mathcal{C}}_{i,g} = 	\mathcal{C}_{i,g} \sim \mathcal{B}_d,
\end{align}
where $\bar{\mathcal{C}}_{i,g} \subset \mathbb{R}^2$ is a shrinked cone and $\mathcal{B}_d \subset \mathbb{R}^2$ a ball of radius $d$ and centered at the origin. 

To enforce the orientation increment to be bounded, we start by constraining this variable to the symmetric interval $[-\pi, \pi]$. Notice that since the orientations can take values between $[0,2\pi]$, as determined in (\ref{eq:disc_angles}), the increments must be corrected to lie within the aforementioned symmetric interval for proper imposition of the bounds. 
The correction is accomplished by introducing the following constraints to the optimization problem, $\forall i \in \mathcal{I}_1^{n_a}$, $\forall k \in \mathcal{I}_0^{\bar{N}}$,

%
\begin{align}
	&\Delta \psi_i(k) \leq \pi +  b_{i,1}^{disc}(k)M, \label{const:disc_1} \\
	-&\Delta \psi_i(k) \leq \pi +  b_{i,2}^{disc}(k)M,\label{const:disc_2} \\
	& \Delta \bar{\psi_i}(k) = \Delta \psi_i(k) - 2\pi(1-b_{i,1}^{disc}(k))+  \nonumber \\
	& \qquad \qquad \qquad \qquad \quad  2\pi(1-b_{i,2}^{disc}(k)). \label{const:disc_3} 
\end{align}

Constraints (\ref{const:disc_1}) and (\ref{const:disc_2}) enforce that if $\Delta \psi_i$ lie outside the desired interval of $[-\pi, \pi]$, then one of the binary variables $b^{disc}_{i,1} \in \{0,1\}$ or $b^{disc}_{i,2} \in \{0,1\}$ are activated, whereas (\ref{const:disc_3}) updates a new variable $\Delta \bar{\psi}_i(k)$ with the appropriate correction by summing or subtracting $2\pi$. Now, the orientation increment bounds are enforced to be within the interval $[\Delta \psi^{min},\Delta \psi^{max}]$ by the constraints, $\forall i \in \mathcal{I}_1^{n_a}$, $\forall k \in \mathcal{I}_0^{\bar{N}}$, 
\begin{align}
	&\Delta \bar{\psi}_i(k)  \leq \Delta \psi^{max},\label{const:ori_increment1}\\
	-&\Delta \bar{\psi}_i(k)  \leq \Delta \psi^{min}.\label{const:ori_increment2}
\end{align}
%
\subsection{Optimization problem}
Let $\boldsymbol{\lambda} = [\mathbf{r}_i, \xi_i, a_i, N, b^{tar}_e, b^{con}_{i,j},b^{ori}_{i,j},b^{disc}_{i,1},b^{disc}_{i,2} ]$ denote the vector of all optimization variables. Then the motion planning optimization is written as the following. 
\begin{subequations}
	\allowdisplaybreaks
	\begin{flalign}
		&\text{\textbf{Motion planning problem.}} \nonumber \\
		&\underset{\boldsymbol{\lambda}}{\textrm{min}}\  N +   \sum_{k=0}^{N} \sum_{i=1}^{n_a} \left ({\gamma_a}\Vert a_{i}(k)  \Vert_1 +  \gamma_{\Delta \psi}\Vert \Delta \psi_{i}(k) \Vert_1 \right)
		\label{eq:cost} \\
		&\textrm{s.t., } \forall i \in \mathcal{I}_1^{n_a}, \  \forall c \in \mathcal{I}_1^{n_o},\ \forall e \in \mathcal{I}_1^{n_t},\ \forall k \in \mathcal{I}_0^{N}, \nonumber \\
		&\text{(\ref{eq:dyn_sigma}), (\ref{eq:lin_vel}), (\ref{const:dyn_1a}-\ref{const:ori_binary}), (\ref{const:indeg}-\ref{const:gen_outdeg}), (\ref{const:LOS}), (\ref{const:cone}-\ref{const:front_2}) (\ref{const:disc_1}-\ref{const:ori_increment2})}  \nonumber\\
		& \xi_i^{min} \leq \xi_i(k) \leq \xi_i^{max}, \label{const:lin_vel_bound}\\
		&  a_i^{min} \leq a_i(k) \leq a_i^{max}, \label{const:acc_bound} \\
		& \mathbf{r}_{i}(k+1) \in \bar{\mathcal{A}}_i(k+1), \label{const:freeSpace}\\
		& b^{tar}_e(k) \implies \mathbf{r}_{n_a}(k) \in  \mathcal{T}_e, \label{const:target_1} \\
		& \sum_{k=0}^{N}b^{tar}_e(k) = 1 \label{const:target_2}.
	\end{flalign}
\end{subequations}

The cost (\ref{eq:cost}) is a compromise between time expenditure, represented by the variable horizon $N \in \mathbb{N}$ \cite{varHor}, the acceleration effort weighted by $\gamma_a >0$, and the orientation increments weighted by $\gamma_{\Delta \psi} >0$. Constraints (\ref{const:lin_vel_bound}) and (\ref{const:acc_bound}) enforce the linear velocities and accelerations to be within the bounds determined by the intervals $[\xi^{min}$, $\xi^{max}]$ and $[a^{min}$, $a^{max}]$, respectively. Collision avoidance is guaranteed by constraint (\ref{const:freeSpace}). 
Finally, (\ref{const:target_1}) and (\ref{const:target_2}) enforce that at the time step corresponding to the optimal horizon $N \leq \bar{N}$, all targets must have been visited by the leader. 

The implementation of (\ref{eq:cost})-(\ref{const:target_1}) is omitted for the sake of brevity. 
The reader is referred to \cite{EJC} and \cite{Afonso2020} for a thorough presentation. We also provide the code for the motion planning algorithm as open source\footnote{\url{https://gitlab.com/caregnato_neto_open/MILP_VLC_motion_planning}}.

\section{Results} \label{sec:IV}
We evaluate the proposed formulation through a simulation of the Turtlebot3 robot platform in the Gazebo environment. The orientation control law was designed as a proportional controller with gain $K_p = 6.5$, whereas the linear velocity uses only a feedforward controller. After experiments with the proposed control laws considering the robot's maximum linear velocity to be $\xi^{max} = 0.22$ m/s, we concluded that in the worst case scenario $\Delta t \approx 1$ s considering orientation increment bounds of $\Delta \psi^{min} = -60^\circ$ and $\Delta \psi^{max} = 60^\circ$ resulting in $d \approx 0.22$ m. Safety margins of $\epsilon = 0.08$ m were added to all polytopes to handle potential disturbances and tracking errors.  

The bodies of the robots were approximated by the square that circumscribes the circle with a diameter equal to the length of the robot, $0.178$ m. The parameters of the light cone were selected as $\theta = 60^\circ$, $h = 2.8$ m. The base is introduced in the problem using the same constraints but considering a static agent. Since the proposed algorithm operates only as an open-loop trajectory planner and the Turtlebot3 operates with relatively low velocities, we employed a sampling period of $T = 4$ s
 The maximum horizon was selected as  $\bar{N} = 8$ , corresponding to a maneuver with a maximum duration of 32 s. The total number of orientations was chosen as $n_\psi = 12$, corresponding to steps of $30^\circ$. The light cone was approximated by a polytope with $n_s = 6$ sides. The optimization weights were selected as $\gamma_a = 0.1$ and $\gamma_{\Delta \psi} = 0.1$.

The planning problem was solved in a notebook equipped with an Intel$^\text{\textregistered}$ i5-1135G7 (2.40 GHz clock) CPU and 16 GB of RAM using the Gurobi solver \cite{gurobi}. The optimization model was built using the Yalmip \cite{Lofberg2004} and MPT toolboxes \cite{mpt}. We established a limit computation time of $360$ s resulting in a feasible solution with an optimality gap of $4.2\%$. 

Fig. \ref{fig:traj}a shows the commanded and followed trajectories of the RCCS in an environment with width and length of $4$ m and $3$ m, respectively, as well as two targets and obstacles.  The group was able to track the trajectories, demonstrating that the algorithm was able to compute adequate motion considering the physical limitations of the robots. The maximum tracking error was $0.078$ m, which is within the expected margins. The optimal horizon 
was $N=7$, corresponding to the mission being finished after 28 seconds; after this period, the group is commanded to stop translation. Fig 3b depicts the terminal pose of the RCCS.

\begin{figure}
	\centering
	\includegraphics[width=0.4\textwidth]{./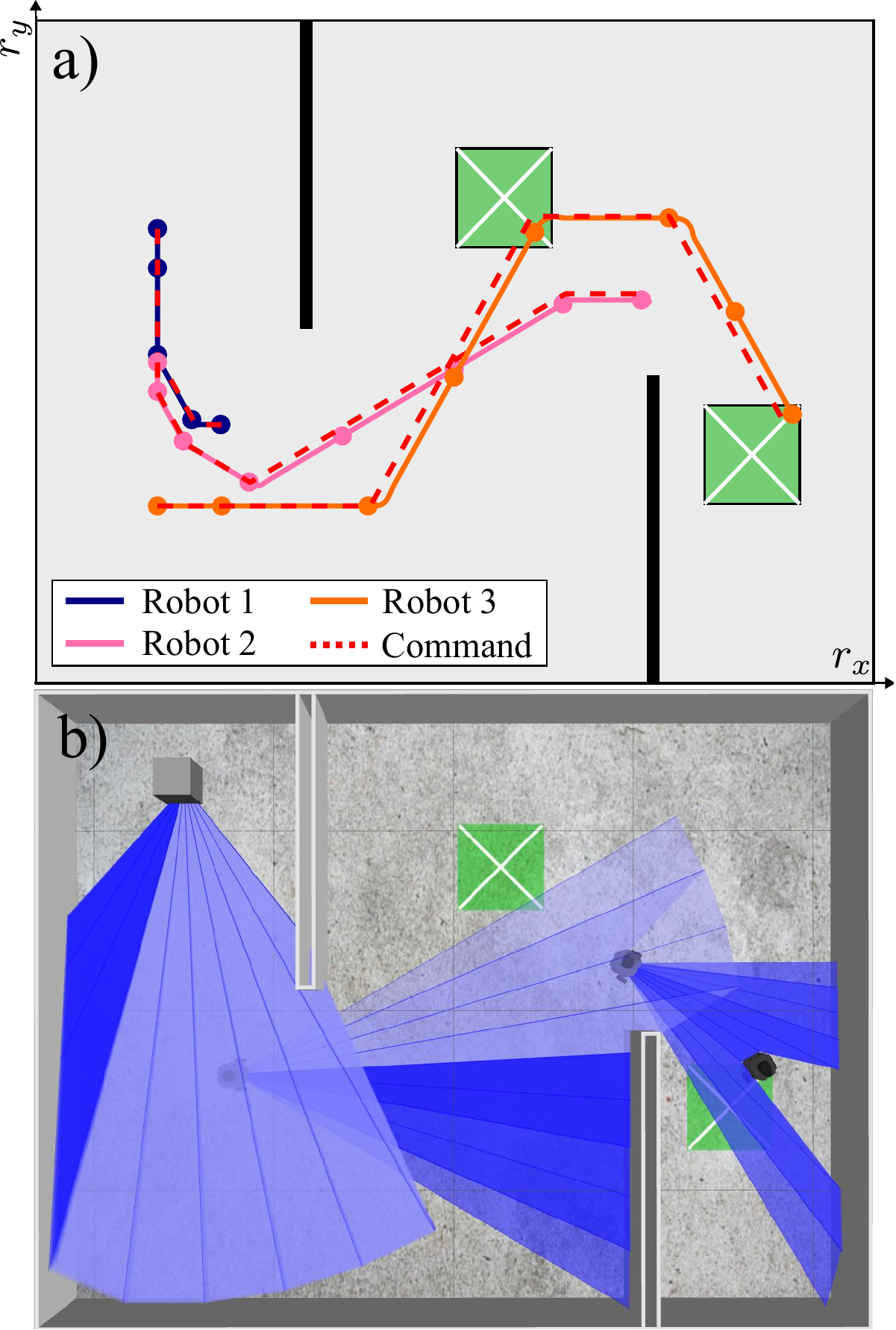}
	\caption{a) Trajectories computed by the MILP algorithm and tracking performance of the Turtlebots3 during the Gazebo simulation depicted in b) Final pose of the robots in the simulator. Green squares with white \textit{x} markers are the targets. Simulation video available at: \protect \url{https://www.youtube.com/watch?v=xDju9YRVrtg} }
	\label{fig:traj}
\end{figure}

\begin{figure*}[!ht]
	\centering
	\includegraphics[width=7in]{./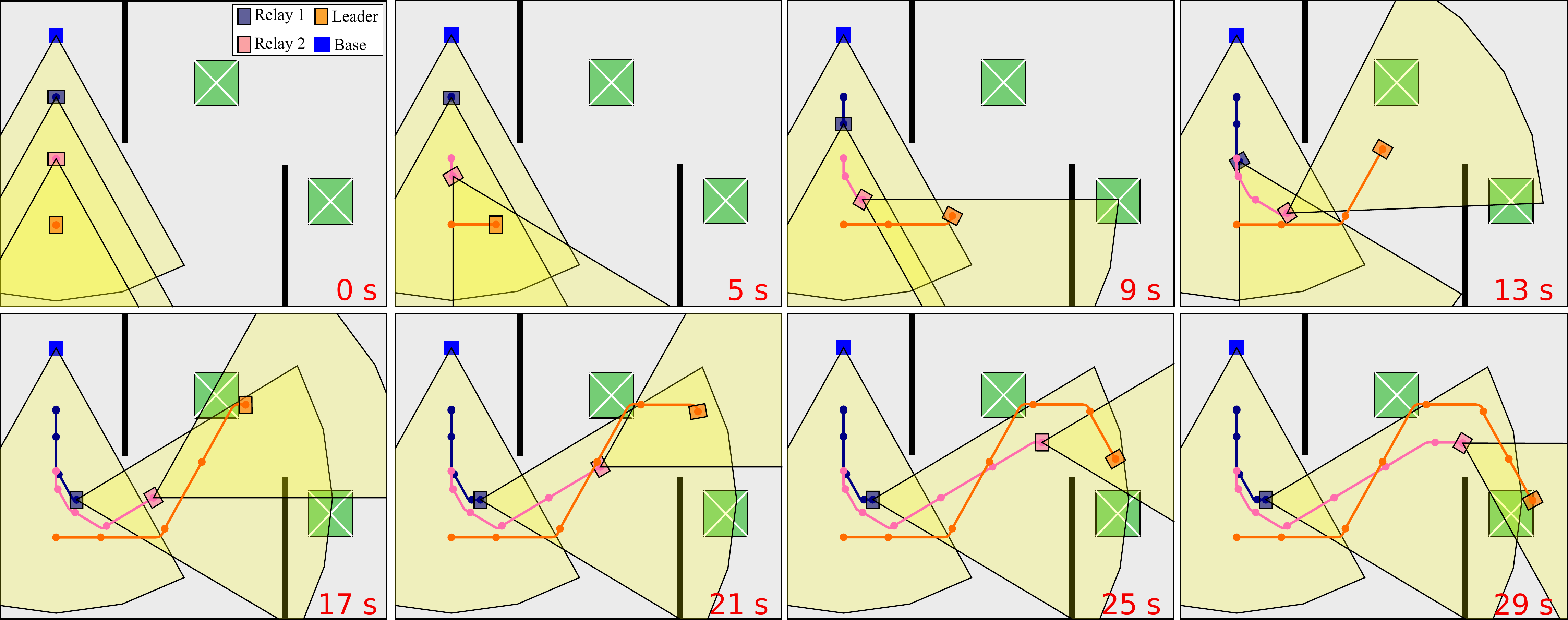}%
	\caption{Snapshots of the Turtlebots' position during Gazebo simulation at critical time steps. Green and black polygons are targets and obstacles, respectively.}
	\label{fig:matlab_snapshots}
\end{figure*}
\begin{figure*}
	\centering
	\includegraphics[width=7in]{./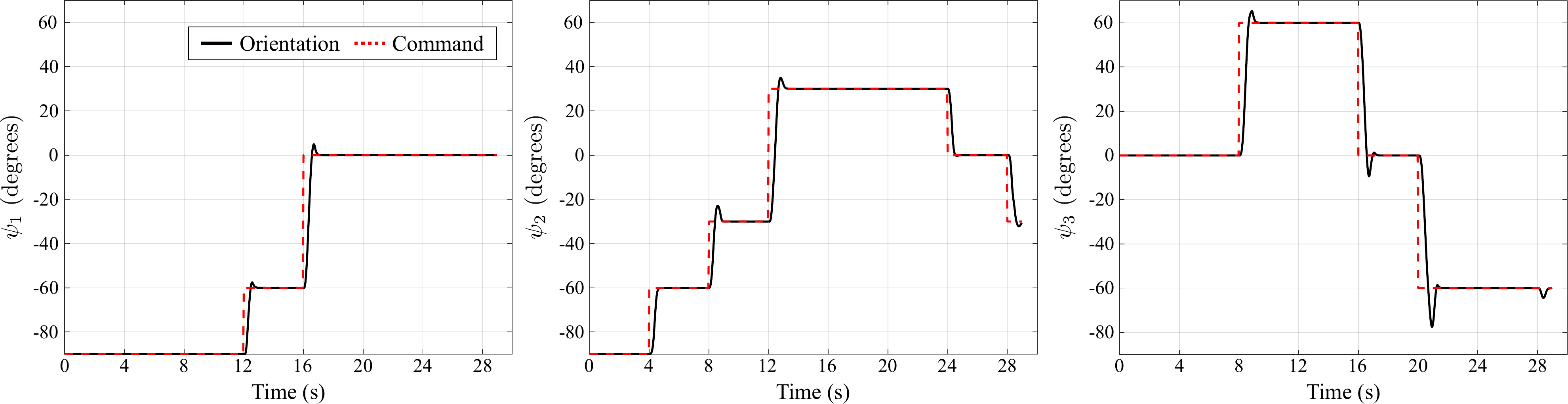}%
	\caption{Orientation of each robot during the maneuver and corresponding commands.}
	\label{fig:results_orient}
\end{figure*}

The snapshots of the maneuver at each instant $kT+\Delta t$ and the corresponding light beams of the base and robots are presented in Fig. \ref{fig:matlab_snapshots}. Using the planned trajectories the relays were able to successfully coordinate, maintaining the connectivity of the network at the appropriated time steps while the leader reached both targets. No collisions occurred. The commanded orientation increments were always kept within the desired bounds, as illustrated by Fig. \ref{fig:results_orient}, allowing each robot to reach the desired orientation within the required time interval.
%

%
		%
%
%
\section{Conclusion}\label{sec:V}
This work addressed the motion planning problem of an RCCS under VLC connectivity constraints. The proposed MILP-based formulation was able to compute a high-quality solution in terms of trajectories for the group of robots, resulting in appropriate coordination. The functionality was demonstrated using realistic Gazebo simulations where a group of Turtlebots completed a mission while maintaining connectivity and avoiding collisions. Future formulations can integrate the decision problem of switching the size of the luminosity cones to preserve battery life. Alternative connectivity conditions can also be studied to allow a flexible network topology. Experiments with real robots would further validate the proposal.

\bibliographystyle{IEEEtran}
\bibliography{IEEEexample}

\end{document}